\documentclass[acmsmall,screen,nonacm]{acmart}
\setcopyright{none}
\makeatletter
\@ifundefined{hyxmp@parse@acmart}{}{\let\hyxmp@parse@acmart\relax}
\makeatother
\usepackage{amsmath,mathtools,booktabs,tabularx,array,enumitem,microtype}
\usepackage{tikz}
\usetikzlibrary{arrows.meta,positioning,fit,calc}
\hypersetup{keeppdfinfo=true,hidelinks,pdftitle={Evidence-Guided Recovery for AI Agents with Ambiguous Tool Outcomes and Partial Execution},pdfauthor={Genliang Zhu; Xiaobo Chen; Ziyuan Wang; Chu Wang},pdfcreator={LaTeX}}
\setlist[itemize]{leftmargin=1.25em}
\setlist[enumerate]{leftmargin=1.5em}
\newcolumntype{Y}{>{\raggedright\arraybackslash}X}
\newtheorem{proposition}{Proposition}
\newcommand{\MechanismTemplates}{20}
\newcommand{\MechanismAssignments}{37}
\newcommand{\MechanismRecords}{185}

\newcommand{\JointSelectionSeconds}{9.53}
\newcommand{\RulesSelectionSeconds}{0.14}
\newcommand{\JointSelectionPninetyfive}{288.53}
\newcommand{\RulesSelectionPninetyfive}{3.55}

\newcommand{\DiagnosticTasks}{21}
\newcommand{\DiagnosticPairs}{63}
\newcommand{\DiagnosticRootValues}{408}
\newcommand{\DiagnosticBudgetPairs}{126}
\newcommand{\DiagnosticPartialLayers}{111}
\newcommand{\DiagnosticFallbacks}{27}
\newcommand{\OriginalExtraSeconds}{9.585}
\newcommand{\OriginalBreakEven}{2.396}

\newcommand{\RecPairs}{58}
\newcommand{\RecFamilies}{19}
\newcommand{\RecEpisodes}{116}
\newcommand{\RecEvidence}{9}
\newcommand{\RecPlanner}{30}

\newcommand{\RecDifference}{36.2}
\newcommand{\RecP}{0.0078}
\newcommand{\RecEvidencePct}{15.5}
\newcommand{\RecPlannerPct}{51.7}

\title{Evidence-Guided Recovery for AI Agents with Ambiguous Tool Outcomes and Partial Execution}
\author{Genliang Zhu}
\affiliation{\institution{Accentrust}\city{Vancouver}\country{Canada}}
\affiliation{\institution{Georgia Institute of Technology}\department{School of Computer Science}\city{Atlanta}\state{Georgia}\country{USA}}
\email{research@accentrust.com}
\author{Xiaobo Chen}
\affiliation{\institution{University of Massachusetts Amherst}\city{Amherst}\state{Massachusetts}\country{USA}}
\author{Ziyuan Wang}
\affiliation{\institution{Duke University}\city{Durham}\state{North Carolina}\country{USA}}
\author{Chu Wang}
\affiliation{\institution{Accentrust}\city{Vancouver}\country{Canada}}
\affiliation{\institution{University of Illinois Urbana-Champaign}\department{Department of Computer Science}\city{Urbana}\state{Illinois}\country{USA}}

\begin{document}
\begin{abstract}
Tool timeouts can leave AI agents facing ambiguous tool outcomes and partial execution: an action may have failed, committed or remained in flight while other workflow steps have already completed. We formulate evidence-guided task recovery over approved obligations, provider capabilities and states compatible with public evidence. A shared admissibility check handles key expiry, authorization timing, dependencies and residual effects; a terminal certificate separates actual goal satisfaction from publicly supported safe completion. The known-model implementation combines standard observation-contingent planning with a separately implemented effect-history adjudicator. In a finite study of 37 assignments, planning and three active controls each certify 28 completions; four saved synthetic fee units incur additional computation. A preregistered assignment-held-out comparison then evaluates planning advice against evidence prompting with two local model bundles. Across \RecPairs{} paired units in \RecFamilies{} previously developed task families, advice certifies \RecPlanner{} completions versus \RecEvidence{}, a \RecDifference{}-percentage-point difference (exact family sign-flip $p=\RecP$), using 272 proposals versus 399. All \RecEpisodes{} trajectories pass independent replay. Both models exhibit a positive completion difference. The comparison measures the evaluated advice interface under a common shield; search-versus-rule advice requires a separate matched comparison. Unknown and handoff outcomes remain explicit, with guarantees tied to the declared provider capabilities rather than arbitrary third-party exactly-once execution.
\end{abstract}
\keywords{AI agents, tool use, failure recovery, ambiguous tool outcomes, partial execution, idempotency, compensation, planning under partial observation}
\maketitle
\section{Introduction}
An agent that creates an invoice, updates a business record, and notifies a customer interacts with several independently operated services. A timeout at the first service does not reveal whether the invoice was created. A batch update may have changed some records before returning an error. A notification can be delivered while its acknowledgment is lost. Repeating the entire workflow can duplicate irreversible effects; stopping can leave legitimate obligations unfulfilled. The operational problem is therefore not only to avoid another write, but to determine which remaining work can safely be completed.

The distinction between a failed invocation and an unknown remote outcome is fundamental to remote procedure calls~\cite{birrell1984rpc}. Durable identifiers, receiver-side deduplication, and application-level evidence can improve recovery, but their guarantees depend on the endpoints that participate. A local transaction cannot make an uncooperative remote service atomic. End-to-end arguments explain why correctness conditions that depend on an application's effects require support at the relevant endpoints~\cite{saltzer1984endtoend}. Compensation provides a different form of recovery: a permitted later action can repair some state, without erasing the fact that the original action occurred~\cite{garcia1987sagas}.

Agent research now addresses these problems directly. SagaLLM combines persistent context, validation, and compensating recovery~\cite{chang2025sagallm}. Atomix distinguishes transaction settlement from execution progress and coordinates different effect classes~\cite{mohammadi2026atomix}. LIMBO studies how models, harnesses, and tool contracts affect duplicate side effects, including late commits and limited read paths~\cite{li2026limbo}. UndoBench separates nominal task competence from fault recovery using paired executions and both effect-history and state evaluation~\cite{sah2026undobench}. Building on these recovery and evaluation foundations, we focus on selecting evidence acquisition and continued execution for the remaining task under a fixed provider contract.

We study the decision problem that remains when the provider's capabilities are fixed. A recovery controller must choose whether to spend resources obtaining evidence, wait for an observation to become decisive, retry under a valid deduplication contract, continue independent obligations, compensate a reversible step, or hand the task to an authorized operator. These choices interact. Waiting for one uncertain step can consume the time needed for another obligation. A cheap query can be inconclusive until a visibility delay has passed. Compensation can create its own ambiguous outcome. Retrying before a key expires can be safe under one contract and unsafe under another. An operator may not respond before the business deadline.

The controller receives the approved task, public capability contracts, public observations, and a supplied finite transition model. That model includes the candidate initial commit times, complete future write/compensation attempt sequences, and their weights. It does not receive the realized assignment or private effect ledger. Thus, this is known-model partial-observation planning, not recovery from a timeout using capability flags alone. We represent uncertainty by all states compatible with these inputs. A common admissibility check excludes actions that add a checked violation or abstract pending hazard in any compatible state. Within that common action set, a finite-horizon planner compares evidence acquisition and continued execution. Probabilities guide selection; they cannot turn a low-probability prohibited transition into an admissible action.

The contribution is a recovery decision interface and its controlled evaluation, built on established observation-contingent planning:
\begin{enumerate}
\item An explicit contract connects remaining obligations, public provider evidence and residual in-flight effects. It distinguishes a correct current snapshot from certified completion stable at every accepted finite-tail prefix.
\item An executable known-model recovery profile compares common action checks, capability rules, one-step selection and standard observation-contingent search under matched observations, authority and resources. Separately implemented adjudication and exhaustive development diagnostics check effects and completed search layers.
\item A source-bound empirical decomposition separates deterministic policy behavior from model-facing advice. A registered assignment-held-out study records 116 model episodes: planning advice certifies 30/58 paired units versus 9/58 with evidence prompting, while strong deterministic controls already tie at 28/37 in their separate frame.
\end{enumerate}
Table~\ref{tab:recovery-evidence} maps these claims to their evidence. The model-facing comparison measures the evaluated advice interface under its common shield; isolating search depth would require matched alternative advice. Weights describe the supplied finite model, and the resource analysis reports synthetic fees separately from selection computation and deployment costs.

The intended application is a cross-system coordinator that preserves useful progress and gives an actionable explanation when completion cannot be certified. The coordinator does not grant additional authority, manufacture provider capabilities, or promise exactly-once completion for arbitrary services. A task with no trustworthy way to distinguish a pending effect from a completed one can legitimately remain unresolved. Such an outcome is kept separate from task completion, successful compensation, and a merely plausible success message.

\begin{table}[t]
\caption{Recovery claims and executed evidence. The common shield, policy choice and model interface are distinct layers.}
\label{tab:recovery-evidence}
\centering\small
\begin{tabularx}{\linewidth}{p{0.20\linewidth}YY}
\toprule
Object & Evidence & Interpretation\\
\midrule
Evidence and completion & Separate effect-history replay and finite-tail checks & Public completion differs from actual private goal satisfaction; unknown remains explicit.\\
Policy selection & 37 assignments, five policies; complete-layer and root-vector audits & Four active policies tie at 28/37; search trades synthetic fee savings for computation.\\
Model-facing advice & 58 paired model--assignment units; 116 independently replayed episodes & 30 versus 9 certificates under the shared shield; not a search-versus-rule-advice comparison.\\
Native persistence & Eight executor qualifications; separate upstream demo/smoke & Qualified call paths, not planner--provider end-to-end effectiveness.\\
\bottomrule
\end{tabularx}
\end{table}

\paragraph{What changes between the model conditions.}
The execution layer checks every proposed action against the same compatible support. The evidence layer presents the same public task and observations. The advice condition additionally presents a planner's next-action recommendation; the model still proposes the action and the shared shield still determines admissibility. Zero newly introduced checked violations is therefore attributed to that common execution contract, whereas differences in completion and proposals describe the advice interface. The treatment bundles recommendation availability with its computation. Rule-derived advice and a legal-action-list interface were not model conditions, so neither recommendation grounding nor multi-step search is isolated as a causal mechanism.

\section{Related Work}
\subsection{Remote effects and durable execution}
Birrell and Nelson's RPC design makes the failure semantics of remote execution explicit~\cite{birrell1984rpc}. The end-to-end principle is relevant when delivery, duplication, and application correctness depend on facts outside a transport layer~\cite{saltzer1984endtoend}. These foundations explain why an HTTP error alone is not a negative effect certificate.

RIFL provides linearizability with durable request identity and completion records in cooperating storage systems~\cite{lee2015rifl}. Beldi supports fault-tolerant, transactional stateful serverless workflows through controlled logging and execution machinery~\cite{zhang2020beldi}. These are important positive examples of stronger execution guarantees when the required infrastructure is available. Our setting treats a third-party tool contract as an input: the controller cannot add a missing receiver-side atomic deduplication record merely by persisting a local request.

\subsection{Compensation and agent transactions}
Sagas decompose a long-lived transaction into subtransactions with compensating actions~\cite{garcia1987sagas}. The compensation is a separately executed operation, subject to permissions, dependencies, and its own failures. It need not reverse every external consequence. For example, removing a draft can restore a local object condition, whereas deleting a record cannot retract a message already delivered.

SagaLLM applies state management, independent validation, and recovery to multi-agent planning~\cite{chang2025sagallm}; we refer to its formally published PVLDB version. Atomix supports progress-aware transactional settlement, reversible external effects, and gating of irreversible effects~\cite{mohammadi2026atomix}. Its reported scope already includes ambiguous acknowledgments and partial irreversible settlement. Our decision object is the selection of evidence and continuation actions under a fixed, potentially incomplete provider contract. An original upstream implementation and an author-written reconstruction are distinct evaluation tracks.

\subsection{Recovery evaluation}
LIMBO varies models, harnesses, and tool contracts, and evaluates recovery conditions including verification, waiting, keys, and guards~\cite{li2026limbo}. It identifies the role of in-flight requests: a negative read does not justify a non-idempotent retry if an earlier request may still commit. Our common action checks incorporate this boundary. A strong combined verify--wait--key control is therefore necessary; blind retry and immediate stop serve only as diagnostic lower controls.

UndoBench uses paired normal and fault executions to distinguish ordinary task competence from recovery capability~\cite{sah2026undobench}. Effect-history checks are needed alongside final-state checks because an apparently correct final state can hide an extra irreversible action. We retain this distinction and compare all recorded assignments, rather than conditioning the main denominator on each method's own normal successes. Benchmark tasks and capabilities that an author extends are explicitly distinguished from the original workload.

The empirical question is whether joint evidence and continuation selection improves the recorded completion--cost relationship beyond these combinations. The claim requires equal observations, actions, authority, and resources. It cannot be supported by granting only the proposed controller an authoritative query or a compensating operation.

AgentSpec specifies runtime constraints and interventions such as stopping, obtaining user input, and replanning~\cite{wang2026agentspec}. We build on runtime enforcement as an established component and compare selection policies inside a common admissibility interface. DART studies semantic recoverability and rollback certificates for structured tool agents, including observed failures and timeouts, with dependency- and effect-aware rollback~\cite{yang2026dart}. Here, an opaque request may already have produced an external effect while returning no usable acknowledgment. The provider's evidence and termination capabilities determine whether continued execution or compensation is admissible; a local rollback certificate alone cannot establish that an unknown remote request has terminated.

\subsection{Planning under partial observation}
Partially observable planning provides the established framework for maintaining possible states and choosing actions contingent on observations~\cite{kaelbling1998pomdp}. Constrained POMDP methods address resource-limited decisions~\cite{isom2008cpomdp}. An expectation constraint is not a pathwise authorization guarantee. We separate the hard support-based action check from the weighted planning objective, and disclose the finite latent distribution used by the controlled model.

ReAct alternates reasoning and action to obtain useful observations~\cite{yao2023react}. Reflexion uses verbal feedback to improve subsequent decisions~\cite{shinn2023reflexion}. They motivate model-based recovery controls, but an agent's reflection is not a provider receipt. A same-model comparison must expose the same public evidence and charge the same tool, token, retry, and waiting costs. Scripted policies in the present controlled track are not counted as executions of either language-agent method.

\paragraph{Comparative scope.}
SagaLLM integrates persistent state and compensating recovery, while Atomix regulates settlement and irreversible release~\cite{chang2025sagallm,mohammadi2026atomix}. Their execution guarantees depend on their participating machinery; the present profile takes a fixed provider contract and a supplied transition model as inputs and evaluates remaining recovery choices within them. LIMBO and UndoBench inform capability and effect-history controls, but their qualifications here do not constitute information-matched model rankings. The measured added value is a recovery formulation, auditable conditional execution and a model-facing advice effect. The standard search, compensation and shielding components retain their existing research provenance.

\section{Tasks, Contracts, and Evidence}\label{sec:model}
\subsection{Approved obligations}
A task consists of a finite directed acyclic dependency graph, approved object-level goals, a deadline, and resource limits. An operation $i$ binds a target, a request intent, and an idempotency key when the provider supports one. The reference model uses distinct targets and distinct logical keys across operations. Repeated attempts of the same operation retain that binding. Tasks with shared mutable targets or cross-operation key aliasing require a richer provider model and are rejected by this profile.

Each operation has a goal value $g_i\in\{0,1\}$. Value one requires the intended object or effect to be present once; value zero requires an approved reversible staging effect to be absent. A retirement workflow can therefore require removal of an old staging configuration followed by activation of its replacement. This is different from changing the original goal after a failure. Removing the old configuration alone does not complete a task that also requires the new configuration.

The goal predicate is
\[
 G(s)=\bigwedge_i \left(x_i(s)=g_i\right),
\]
where $x_i$ is the operation's modeled object/effect count. Dependencies require the predecessor's approved goal value at the relevant effect boundary. The model distinguishes append effects from replacement effects. An additional append can violate an effect constraint even if a later repair yields an acceptable current object state.

The full history records both writes and compensations. Reversibility is an explicit capability, not an inference from a successful delete call. An irreversible notification-like append cannot be compensated by deleting a local record. Compensation of a predecessor is also restricted while a dependent effect exists or is still pending. These restrictions protect dependency consistency rather than treating a reverse traversal of the graph as sufficient authority.

\subsection{Provider capabilities}
Table~\ref{tab:contracts} identifies the relevant contract dimensions. The finite implementation instantiates these dimensions through explicit operation fields and validates their types, supported ranges, and bindings before execution. It does not discover authoritative guarantees from free-form model output.
\begin{table}[t]
\caption{Capabilities used in recovery decisions. A missing capability is not supplied by the local journal.}
\label{tab:contracts}
\begin{tabularx}{\linewidth}{@{}lY@{}}\toprule
Capability & Decision-relevant meaning\\\midrule
Deduplication & Stable intent key, validity interval, and atomic suppression of a previously accepted same-key write.\\
Query & Current-state versus attempt/history scope, authority, visibility delay, and supported operation binding.\\
Commit bound & A trustworthy upper bound for previously accepted requests; distinct from a bound on later attempts.\\
Final response & Whether a response certifies a settled attempt, rather than merely reception.\\
Compensation & Reversible effect class, reverse action, its deduplication behavior, and dependency conditions.\\
Authorization & Validity interval and whether authority is checked at acceptance or at effect commit.\\
Resources & Fees, calls, logical duration, and the business deadline.\\\bottomrule
\end{tabularx}
\end{table}

The provider model records an idempotency key at acceptance and suppresses subsequent requests while the declared key is valid. An expired key does not justify a retry. A deduplicated response can describe historical execution without proving that the current state satisfies the business goal; its observation scope is therefore retained.

Query authority and finality are separate. A query can report the current count accurately while an older request remains in flight. A delayed visibility path can return absence after a commit. In either case, absence is not automatically a certificate that another non-idempotent attempt is safe. For an initial request, the model can declare a commit bound. Subsequent attempts have their separately declared delay support. Conflating the two would prematurely certify the end of newly accepted work.

The model exposes acceptance-time and commit-time authorization. Under the former, a later withdrawal does not retroactively invalidate an already accepted request. Under the latter, prevention of an unauthorized late commit requires provider enforcement at commit or equivalent termination support. A local controller cannot promise to cancel an already accepted remote request when the provider offers no such capability.

\subsection{Public history and private outcomes}
Let $h_t$ contain the approved task and all public action/observation pairs up to time $t$. A latent state includes actual counts, in-flight effects, visibility history, accepted key records, and attempt positions. Beyond capability flags, the supplied model gives a finite set of initial states and, for each alternative, complete ordered write and compensation outcome sequences. Each outcome specifies success/failure, reply type, finite delay, and effect multiplicity. Attempt $k$ uses entry $\min(k,m-1)$ in a zero-indexed sequence of length $m$: after its final entry, later attempts repeat that entry. These candidate future sequences and commit times are visible to every active policy; only the realized alternative is hidden. They are strong known-transition inputs, not inferred provider guarantees or a learned fault model.

The default initializer takes the Cartesian product of operation alternatives and multiplies their locally normalized positive weights. This imposes cross-operation prior independence. Correlated failures require an externally supplied joint support, which the existing public-belief interface accepts; the primary 37-assignment enumeration uses the product prior. The weights describe author inputs rather than estimated incident probabilities. Safety still checks every retained alternative, regardless of its weight.

The belief support $B_t$ contains all latent states still compatible with $h_t$. Private scientific assignments select one member of the initial support for a recorded execution. The assignment identifier is never passed to the recovery selector. The environment's evidence export is accessible to collection and adjudication, not through the controller's tool interface.

For action $a_t$ and observation $o_{t+1}$, the update is
\[
 B_{t+1}=\{f(s,a_t):s\in B_t,\ \operatorname{obs}(s,a_t)=o_{t+1}\}.
\]
Here $f$ is the declared transition model, including elapsed time and due in-flight events. Equality covers the entire public observation: operation and request binding, time, status, count when supplied, scope, and finality. It does not compare only the return status.

Positive weights are normalized after observation filtering. They affect planning, but every surviving support member participates in safety checks. A contradictory observation that matches no member produces an explicit model-contradiction error. The controller does not continue with an empty support, substitute a convenient state, or use scientific truth to repair the contradiction.

\subsection{Safety and terminal completion}
The modeled safety conditions cover unauthorized effects, duplicate append effects, invalid dependency execution, and invalid compensation. Resource constraints cover all executed recovery actions. Initial or externally scheduled violations remain in the full history. They are reported separately from violations newly introduced by recovery; they are not removed from the task denominator.

For a finite-tail state that is initially safe, and a provider that obeys its declared transition model, an admissible action must preserve the checked conditions for every compatible state. On an initially unsafe task, the implementation checks that an action does not add modeled violations or increase the covered abstract unknown-pending hazards. Evidence acquisition and isolation can remain legitimate even when the original task can no longer be counted as safely completed.

Completion requires more than $G(s_t)$. Define $C(h_t)$ to mean that every state in $B_t$ satisfies the approved goal, resource limits, and terminal safety. For finite tails, terminal safety requires the goal and effect constraints to remain satisfied after every already-accepted future effect in the declared serialization order. Equal final counts alone are insufficient: a pending compensation followed by a restoring write can destroy the goal in between. Equal-time events use the same dependency and stable-identifier ordering as execution. A same-valued replacement can be harmless.

An initial pending event with \texttt{commitAt: null} is instead an abstract unknown-delay sentinel. The executor never advances that sentinel into a concrete commit, even after waiting. It stands for missing settlement knowledge, not an empirical event that is guaranteed never to occur and not an enumeration of every arbitrarily late finite time. The separate containment checks cover specified duplicate, dependency, and compensation hazards; any remaining sentinel prevents certification. This abstraction is not a complete ordering model for arbitrary unknown remote delays.

Let $\Pi_{\mathrm{adm}}$ contain policies measurable with respect to public history whose selected actions pass the common action check. The declared initial weights induce $b_0$, and the controlled objective is a publicly supported safe completion at a stopping time $\tau$ within deadline $T$:
\[
 \max_{\pi\in\Pi_{\mathrm{adm}}} \Pr_{b_0,\pi}\!\left[\exists\tau\leq T:\ G(s_\tau)\land C(h_\tau)\land I(H_{\mathrm{terminal}})\right].
\]
$I$ evaluates the explicit finite effect history, including late effects beyond the task cutoff. Unknown sentinels remain unresolved rather than being labeled through a fabricated concrete tail. In an actual service, a finite observation window alone is not a termination proof. A trusted bound or a separately qualified harmlessness argument is necessary. Handoff and compensated exit remain distinct outcomes; neither is automatically counted as the original goal's completion.

\section{Evidence and Continuation Planning}\label{sec:method}
\subsection{Common action admissibility}
The action interface contains query, write, wait, compensate, and handoff. Each action consumes declared fee, time, and call resources. Waiting is charged under the task's public resource model, and handoff consumes resources without resolving the goal by definition. All primary controls use the same action interface and admissibility procedure.

For each candidate action, the procedure enumerates the full current support, evaluates the declared transition, and checks the remaining resource limits. It settles finite pending effects for both the no-new-action reference and the action's resulting state. An action is rejected if it introduces an additional modeled violation. For unknown-delay sentinels, a separate abstract check rejects increases in the covered duplicate, dependency, or compensation hazards; it does not concretely replay arbitrary late commits. The safety check does not sample the support or omit a low-weight state.

This procedure gives different answers under different contracts. With a valid atomic same-key receiver contract, a repeat attempt can be allowed when it cannot append another effect. Without that contract, an uncertain append remains blocked unless the public model and evidence rule out both a prior commit and a harmful future commit. A replacement can be admissible under the implemented same-target, same-intent model, but this is not permission to overwrite a different business intent.

Compensation is checked as an ordinary externally effectful action. Its target must be reversible and approved, its dependencies must allow removal, and its possible delayed result must be included in the safety assessment. A timeout during compensation therefore creates another evidence problem, not a successful rollback marker. Repeated compensation depends on its own deduplication contract.

\subsection{Observation-contingent search}
The planner builds branches for complete public observations. For action $a$ at belief $b$, all model states predicting the same observation form a branch, and their positive masses give its probability. The planner evaluates subsequent admissible actions within each resulting belief, building on standard partially observable planning~\cite{kaelbling1998pomdp}.

The implemented value is lexicographic. It prioritizes the projected probability of certified completion, then expected satisfaction of approved obligations, then negative recovery cost. The final cost component uses the task's declared fee plus a fixed small duration tie-breaker. This tie-breaker is a controlled implementation choice, not a conversion of money into a universal value of human time. Actual fees, calls, duration, and compensation counts are reported separately.

An iteratively deepened expectimax search evaluates the observation branches. It caches complete belief/depth subproblems and orders actions by public one-step values. Only a layer in which all root actions and their observation branches have been backed up at the same depth can replace the previous selection. Diagnostics retain each completed root action's full vector $(\text{completion probability},\text{goal progress},\text{negative cost})$. A partially explored deeper layer is recorded but cannot compare its deeply searched action against another action's immediate leaf value. A node budget limits optimization effort; it never limits the support checked by admissibility. With no completed informative layer, selection uses the capability-rule fallback. Exactness is restricted to the completed horizon and supplied transition model and prior. Receding-horizon execution does not imply globally optimal recovery or guaranteed completion under a finite budget.

Resource-constrained POMDP shielding already combines hard action exclusion with planning~\cite{ajdarow2023shielding}. Our use of this separation is explicit. The application-specific content is the provider evidence semantics, task obligations, in-flight effect tail, and experimental comparison of recovery decisions under those semantics.

\subsection{Matched controls}
The capability-rule control can continue independent legal obligations, attempt valid same-key recovery, obtain informative queries, wait for possible evidence, traverse a finite safe retry sequence, compensate in dependency-safe order, and hand off. It receives the same public latent support as the planner. It is not deprived of a query or inverse operation to produce a favorable comparison.

The verify--wait control uses the same admissibility checks and actions, but prioritizes informative queries before continued execution. It is an author implementation motivated by the recovery combinations in LIMBO~\cite{li2026limbo}, not an execution of an unavailable original artifact. The greedy control ranks safe actions using one-step completion, obligation progress, information change, and cost. It has the same prediction model but does not build a multi-step observation tree. Immediate stop is a diagnostic control and does not report success on an unresolved task.

The controls' fixed priorities and tie-breaking rules are recorded in source before each collection. Both greedy and joint selection use the complete capability rules when their evaluated values provide no completion or obligation progress. That fallback is part of the implementation, not evidence of a separate planning gain. Safe retry opportunities are not filtered out merely because the immediately predicted attempt fails: a finite sequence can contain a known failed attempt followed by an allowed successful retry.

\subsection{Execution and handoff}
After selection, the controller executes one public action, records its observation, updates the full support, and replans. A completion check can stop execution only when all supported states satisfy the terminal certificate. Otherwise, the controller returns an unresolved condition, an explicit handoff, a model contradiction, or a recorded resource/decision-limit outcome.

A practical handoff should contain the bound logical requests, known effects, unresolved alternatives, remaining obligations, permitted next actions, and resources already consumed. The controlled interface records the handoff decision and its cost. It does not simulate an operator as if the operator had completed the task, nor does it assign an operator access to private scientific state.

\begin{figure}[t]
\centering
\begin{tikzpicture}[node distance=5mm and 7mm,box/.style={draw,rounded corners,align=center,font=\small,text width=2.3cm,minimum height=9mm},arr/.style={-{Latex},semithick}]
\node[box] (e) {Public contracts\\and observations};
\node[box,right=of e] (b) {Full compatible\\state support};
\node[box,right=of b] (s) {Common safety\\and resource checks};
\node[box,below=of s] (p) {Rules, greedy, or\\joint selection};
\node[box,left=of p] (a) {One public\\recovery action};
\node[box,left=of a] (o) {Provider response\\and elapsed time};
\draw[arr] (e.east)--(b.west);
\draw[arr] (b.east)--(s.west);
\draw[arr] (s.south)--(p.north);
\draw[arr] (p.west)--(a.east);
\draw[arr] (a.west)--(o.east);
\draw[arr] (o.north)--(e.south);
\end{tikzpicture}
\caption{The primary policies differ at selection, not at the safety boundary. The private effect ledger is used only by adjudication and is not an arrow into the controller.}
\Description{A six-node loop from public contracts through compatible support and shared safety checks, policy selection, one recovery action, and the resulting public observation.}
\label{fig:control}
\end{figure}

\section{Implementation and Conditional Properties}
\subsection{Finite reference profile}
The reference implementation uses JavaScript modules for the provider model, public-belief update, and recovery policies. A separate Python implementation reconstructs observations, effects, costs, terminal state, and the public certificate. Both implementations operate on synthetic requests and logical time. No customer schema, real financial transfer, or sent email is represented as a measured deployment.

Schema validation rejects cycles, cross-operation target/key aliasing, unsupported fields, invalid assignments, and unsafe integer arithmetic. A task has at most twelve operations and at most 4,096 initial support combinations. These are declared implementation limits. The study's templates remain within them; the limits are not evidence of scalability for large business workflows.

The environment and controller are separate interfaces. Collection can obtain the private evidence export after an episode. The recovery module imports only the declared finite transition model, not a live environment instance or its evidence export. Supplying the public model is part of the controlled assumption: it does not mean that arbitrary provider behavior has been inferred from a timeout.

\subsection{Support retention}
\begin{proposition}[Retention under a correct finite contract]
Suppose the actual initial state belongs to the declared support, the provider follows the declared transition/observation model, and the controller retains every state matching a complete observation. The actual state remains in the support after every successful update.
\end{proposition}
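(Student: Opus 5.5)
The plan is induction on the number of completed updates, using the update rule $B_{t+1}=\{f(s,a_t):s\in B_t,\ \operatorname{obs}(s,a_t)=o_{t+1}\}$ from Section~\ref{sec:model}. Write $s^\ast_t$ for the actual latent state at time $t$. This includes actual counts, in-flight effects, visibility history, accepted key records, attempt positions, and the realized alternative's outcome sequences. The invariant I would prove is $s^\ast_t\in B_t$ for every $t$ at which the controller has performed a successful update.

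\textbf{Base case.} This is immediate from the first hypothesis: $B_0$ is the declared initial support, and the actual initial state is assumed to lie in it. One point needs care here. Positive-weight normalization and the product prior must not prune support members, so membership is a purely set-theoretic matter. Weights are only renormalized after filtering, as stated in the belief-update paragraph, and never remove a member.

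\textbf{Inductive step.} Assume $s^\ast_t\in B_t$ and let the controller choose $a_t$. This choice is a function of public history only, but the argument does not need that fact. By the second hypothesis, the provider follows the declared model. Hence the realized observation is $o_{t+1}=\operatorname{obs}(s^\ast_t,a_t)$, and the next actual state is $s^\ast_{t+1}=f(s^\ast_t,a_t)$, where $f$ includes elapsed time and due in-flight events. So $s^\ast_t$ is a member of $B_t$ that satisfies the filter condition. By the third hypothesis, the controller keeps every matching state, comparing the \emph{complete} observation tuple: binding, time, status, count, scope, and finality. Therefore $f(s^\ast_t,a_t)=s^\ast_{t+1}\in B_{t+1}$.

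\textbf{Failed updates.} A successful update is one that does not raise the model-contradiction error. Under the hypotheses, the empty-match case cannot occur, because $s^\ast_t$ itself always matches. It follows that every update is successful, and the controller never substitutes a state.

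\textbf{Main obstacle.} The delicate part is not the induction itself but making precise that ``the provider follows the declared model'' covers exactly the same objects that the filter compares. Three things must line up. First, the actual state must be a latent state in the same representation, including attempt index $k$ and the rule $\min(k,m-1)$ that selects which outcome entry is used. Second, the observation must be a deterministic function of $(s,a)$. Third, equal-time due events must be processed in the same serialization order in $f$ and in the actual environment.

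I would state these conditions explicitly as the meaning of the second hypothesis. I would also note that unknown-delay sentinels are part of the state and are carried unchanged by $f$, so they introduce no divergence. If the observation were stochastic given $(s,a)$, I would instead take the latent state to include the realized outcome sequences, which restores determinism. The paper's latent alternatives already encode these sequences, so determinism holds as the model is stated.
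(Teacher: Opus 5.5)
Your proposal is correct and follows the same route as the paper's proof. The paper likewise inducts on updates: the base case is the support assumption, and in the inductive step the actual state in $B_t$ generates the realized complete observation under the declared model, so exact filtering retains its successor, with a contradictory observation producing an error rather than an update. Your extra remark that no contradiction can actually arise under the hypotheses, because $s^\ast_t$ always matches, is a harmless sharpening of the paper's closing sentence.
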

\begin{proof}
Initially the conclusion holds by the support assumption. If the actual state belongs to the current support, its declared transition produces the actual next state and public observation. Exact observation filtering therefore retains that transition. Applying the argument to each action proves the claim. A contradictory observation triggers an error rather than a completed update.
\end{proof}
The proposition characterizes support retention by the implemented observation filter under the stated contract. Initial-support coverage and provider adherence are its explicit premises.

\subsection{Action and completion conditions}
\begin{proposition}[Conditional preservation and completion]
Under the retention assumptions for the explicit finite transition profile, an action checked against every support member does not introduce any violation covered by the implemented action check. If the initial state is safe including its accepted finite effect tail, repeated admissible actions preserve those checked conditions. If a completion certificate holds for every support member and every accepted finite-tail prefix, it holds for the actual supported execution. Separately, the unknown-sentinel abstraction rejects actions that increase its covered hazard counts and refuses completion while a sentinel remains.
\end{proposition}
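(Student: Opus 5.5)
The plan is to reduce each of the four claims to the retention proposition. By that proposition, the actual latent state $s_t$ belongs to $B_t$ after every successful update. Every property that the implementation checks \emph{for all} support members therefore transfers to the actual state. I would organize the proof as four short arguments, one per sentence of the statement, each following the pattern ``universal over $B_t$'' plus ``$s_t\in B_t$'' gives ``holds for $s_t$.''

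\emph{Single action.} Suppose action $a$ passes the common check at $B_t$. Then for every $s\in B_t$, the declared transition $f(s,a)$ with its settled finite tail adds no modeled violation relative to the no-new-action reference from $s$. Instantiating at $s=s_t$ gives the claim, because the provider follows $f$. The actual next state is then exactly $f(s_t,a)$, and its settled tail is the one the check evaluated.

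\emph{Repeated actions.} I would argue by induction on $t$. The invariant is that the actual state, together with its accepted finite effect tail, satisfies the checked conditions. The base case is the initial-safety hypothesis. For the step, the single-action claim shows that $a_t$ adds no violation beyond the reference tail of $s_t$, and that reference tail is safe by the invariant. Retention then puts $s_{t+1}\in B_{t+1}$, which restarts the argument for the next action.

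The main obstacle is making precise that ``no additional violation relative to the reference'' composes over steps. Already-accepted effects from earlier actions must remain inside the reference tail at later steps rather than being double counted or dropped. I would handle this by defining the reference at step $t$ as the settlement of \emph{all} pending finite effects in $s_t$. I would then note that waiting or time elapse only realizes events already in that tail, in the same declared serialization order.

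\emph{Completion.} If $C(h_t)$ holds, every member of $B_t$ satisfies the goal, the resource limits and terminal safety at every accepted finite-tail prefix. Since $s_t\in B_t$, the actual execution satisfies them at each of its prefixes, using the same equal-time ordering.

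\emph{Sentinels.} The final sentence is a direct property of the implementation rather than a semantic guarantee. The abstract check rejects any action that increases the covered hazard counts, and the certificate is refused while a sentinel remains. I would state explicitly that these facts give no concrete late-commit guarantee, matching the scope described in the safety and terminal completion subsection.
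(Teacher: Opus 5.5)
Your proposal is correct and follows the paper's proof. Both arguments rest on retention placing the actual state in every checked support, so the universal action check and the universal completion certificate transfer to the actual transition, and induction from a safe settled initial state (including its accepted finite tail) gives preservation. Defining the step-$t$ reference as the settlement of all pending finite effects makes precise a point the paper states only as a remark: a violation-free prefix is insufficient when an accepted request is already scheduled to violate a condition later.
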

\begin{proof}
Retention includes the actual execution in each checked support. The action check therefore includes the actual transition and its modeled pending tail. An action that adds a covered violation in that transition would be rejected. Starting from a safe settled initial state, induction preserves that condition. A prefix without a violation is insufficient if an accepted request is already scheduled to violate a condition later. The same membership argument applies to the universal completion check.
\end{proof}
The finite-state membership argument is conditional on transition-model coverage. The unknown-sentinel check is an abstract containment rule, not a proof over all concrete commit times and interleavings. Neither grants receiver-side cancellation, distributed consensus, or an omitted provider capability. On an initially unsafe history, the relevant implementation condition is containment of new checked violations; the prior violation remains an absolute task failure.

For finite delays, the model settles all already accepted events when assessing the future tail. A residual unknown-delay sentinel prevents terminal certification and remains distinct from a concrete scheduled commit. A non-idempotent retry is inadmissible when the same public history permits both a prior effect and an in-flight future effect that could duplicate it. This action boundary applies established remote-outcome reasoning and the in-flight analysis studied by LIMBO~\cite{birrell1984rpc,li2026limbo}.

\subsection{Native persistence qualification}
An existing single-host durable executor and SQLite store are reused for a separate native qualification. Its public source repository is \url{https://github.com/VancouverCanada/openport}; the qualification binds the locally tested build by file hashes rather than assuming its package version identifies every tested byte. The qualification exercises successful persistence, replay without invoking the handler again, reopened receipt reconstruction, wrong-request and altered-result rejection, missing evidence, and ambiguous delivery containment. An acknowledgment-loss handler creates a synthetic effect before throwing; the default executor retains manual review and does not automatically invoke that handler again.

This is an actual call path through the existing public modules, rather than a renamed reimplementation of their state machine. It is also a qualification track, not a claim that the joint planner has been deployed through a third-party service or that every manual-review state has an automated native reconciliation path. The controlled decision study and native persistence checks retain separate evidence and denominators.

\section{Study Design}\label{sec:study}
\subsection{Recorded workload and denominator}
The controlled workload is a fixed set of authored mechanism scenarios. It contains \MechanismTemplates{} public templates and enumerates all \MechanismAssignments{} combinations of their declared initial latent support. Each combination is executed by the five policies, yielding \MechanismRecords{} records. A template is a business/contract scenario, not a new independent task for every change of an entity name. The workload is development and mechanism evidence, not a random sample of enterprise incidents.

The workload covers dependent billing-like records, a stock dependency diamond, a file fork, independent obligations, and item-level partial batches. Additional templates isolate valid and expired keys, delayed reads, pending requests, absent queries, late effects beyond a deadline, ambiguous compensation, an approved retirement-and-activation goal, a safe retry sequence, initial duplication hazards, two authorization timings, and a restricted budget. The batch profile exposes individual operations. It does not represent an opaque all-in-one call with arbitrary unknown subsets.

All methods receive the same public task, initial observation, finite support, action set, and limits. Private assignments choose the realized state without changing the public inputs. The collection runner passes only the public history and belief into selection. It retains an episode even when an existing violation makes safe completion impossible or when no safe continuation remains. Immediate stop is a diagnostic control; comparisons of planning choices use the shared-check policies.

\subsection{Freezing and collection}
The collection protocol is written before the first episode. It binds the exact case list, policies, resource model, maximum number of decisions, search configuration, runtime, and hashes of the scientific source closure. Evidence is append-only within a new numbered collection directory. A record contains actions, complete observations, effects, resource totals, stop reason, errors, and public search diagnostics. The private fields are for adjudication, not policy input.

The bounded planner uses a maximum horizon of six decisions and a budget of 128 non-leaf belief expansions across iterative-deepening layers. This is not a cap on transition evaluations, full-support checks, or wall computation. It does not remove states from an admissibility check. Every policy has at most 96 decisions and must respect the task's tighter fee, logical-action, and deadline constraints. Search truncations and the completed horizon are retained. Logical waits model decision consequences and are not measured wall-clock service latency.

This development process includes an initial recorded planner qualification and, when necessary, a separately recorded method revision. The initial record and its source archive are preserved. Results from a revision on the same templates remain development evidence; they are not relabeled as a blinded or held-out confirmatory test.

\subsection{Independent adjudication}
The checker first validates input/source bindings, the complete case--policy denominator, record order, hash chain, and recorded errors. It then re-executes each deterministic public policy and environment trajectory. Wall-clock collector duration is excluded from deterministic equality; all scientific actions and observations are included.

A separate Python program replays the assigned provider state and actual actions. It reconstructs due effects, complete observations, resource totals, dependencies, authorization, and final counts. It also maintains the complete public support to check a completion claim. The program imports no JavaScript policy or goal predicate. Validation tests reject forged completion, changed costs, lost late effects, malformed state bindings, and inconsistent observations. Independent implementation reduces shared-label errors; it is not a claim of independence from the common written specification.

The primary descriptive count is certified safe completion within the task deadline, including all explicit finite-tail prefixes. We also report private goal satisfaction at the task deadline, existing absolute violations, newly introduced violations, handoff, unresolved outcomes, and all costs. A deadline goal can be true while its public certificate is false. Unresolved counts include handoff and therefore overlap with that category. Unknown-delay sentinels remain unresolved; their eventual concrete effects are not claimed to have been adjudicated.

\subsection{Analysis and native tracks}
Counts and totals describe the enumerated finite set. Templates with more declared states contribute more assignments to the total; this enumeration weight is not an estimated deployment prevalence. No population-level confidence interval, statistical significance test, or production failure probability is inferred from these authored templates. Fees are declared synthetic units, logical actions include waits and handoff, and logical durations include waits. These quantities are reported separately from external RPC count, monetary price, and wall latency.

The separate post-review development audit parameterizes write versus compensation, zero versus delayed outcomes, dependency presence, equal versus unequal priors, failed-first attempt sequences, and initial finite pending effects. An additional externally supplied correlated joint support reuses the public-belief interface. Python exhaustively evaluates every admissible root action at horizons one, two, and three, returning all three objective components and tied optima. JavaScript complete layers are compared componentwise; budget sweeps additionally check that incomplete layers do not affect selection and that fallback agrees with the same rules. This audit shares the authored specification, not the JavaScript decision code, and is stored separately from the 37-case denominator. A deliberately mis-specified attempt model tests observable contradiction handling; it does not estimate the frequency of model errors or prove detection of unobservable errors.

The cost diagnostic preserves every assignment--policy fee, action count, logical duration, and measured selection wall time from the original and current 185-row collections. It reports per-assignment and per-template Pareto sets and a symbolic fee/compute break-even, without assigning market prices. Selection wall time is a compute proxy, not measured CPU billing or provider latency; correlated host load is not controlled by these single-run timings.

The separate native durable-executor qualification exercises eight persistence and evidence cases. The Atomix native demo and an UndoBench scripted smoke run use pinned original repositories and isolated working directories with no model credentials. Their effects and reports are retained separately. The UndoBench script can access a benchmark-internal plan, so it is a pipeline qualification, not an information-matched model baseline. A successful command exit is distinguished from the benchmark's own task verdict.

\subsection{Local-model qualification and assignment-held-out confirmation}
\label{sec:model-study}
The separate qualification pilot evaluates eight authored profiles with one latent assignment per profile, two strategies and two local model bundles, yielding 32 reported cells. Four duplicate smoke cells remain outside that denominator. The evidence-prompt condition exposes public recovery evidence and the action schema; planning assistance additionally supplies the bounded planner's recommended next action. The model proposes an action in both conditions. A common deterministic shield checks admissibility before execution, and a separately implemented Python adjudicator determines terminal outcomes. Both bundles use LM Studio SDK 2.0.0, temperature 0, a requested reasoning budget of 128 and a qualification output cap of 512 tokens.

The subsequent source-bound preregistration fixes the exhaustive complement of the eight pilot assignments within the 37-assignment mechanism frame. All 29 remaining assignments are retained across two models and two strategies: \RecEpisodes{} episodes and \RecPairs{} matched model-by-assignment pairs. The 19 public task families, planner, provider model and shield were used during development. The held-out axis is the latent assignment. Family holdout and independent-runtime replication are additional evaluations.

The confirmatory configuration freezes \path{openai/gpt-oss-20b} and \path{zai-org/glm-4.7-flash}, temperature 0, an input cap of 2,048 tokens, a 96-token output cap, a requested reasoning budget of 128, and at most eight model proposals per episode. Failed proposals consume the common allowance. The maximum is 928 outcome calls, plus at most four preflight calls. Every included record retains prompts, responses, telemetry, admissibility decisions and environment effects. Both model runs completed their entire frozen frame without interface errors or output truncation; Python replay validates all 116 trajectories.

The registered primary endpoint is certified safe completion. Planner-minus-evidence differences are summed within public task family across assignments and both models. The two-sided exact sign-flip test operates on nonzero family sums at $\alpha=0.05$; its inferential interpretation assumes sign symmetry under the null. An exact McNemar test over the 58 pairs is a registered sensitivity analysis. Counts describe the authored frame rather than production incident prevalence. Tokens, proposals and planner computation remain separate resource measurements.

\section{Recorded Results}\label{sec:results}
\subsection{Completion and resources}
Table~\ref{tab:policies} reports the independently adjudicated collection after the common-layer planning revision. Verify--wait rules, capability rules, greedy selection, and joint planning each achieve 28 certified safe completions out of 37 assignments. Immediate stop achieves two, corresponding to goals already certifiable at the start. The active-policy comparison therefore concerns resource use and decision behavior at the same observed completion count, not a completion-rate improvement over strong rules.

\begin{table}[htbp]
\caption{Complete finite enumeration after the planning revision. Goal is private goal satisfaction at the deadline; new violations are episodes with at least one recovery-introduced violation. Fees and time are synthetic units. Logical actions include waiting and handoff.}
\label{tab:policies}
\small
\begin{tabularx}{\linewidth}{@{}Yrrrrrrr@{}}\toprule
Policy & Safe & Goal & New & Handoff & Fee & Actions & Time \\ \midrule
Immediate stop & 2/37 & 9 & 0 & 0 & 0 & 0 & 0 \\
Verify--wait rules & 28/37 & 30 & 0 & 6 & 191 & 106 & 109 \\
Capability rules & 28/37 & 30 & 0 & 6 & 192 & 105 & 108 \\
Greedy selection & 28/37 & 30 & 0 & 6 & 195 & 94 & 113 \\
Joint planning & 28/37 & 30 & 0 & 6 & 188 & 103 & 108 \\
\bottomrule
\end{tabularx}
\end{table}

Joint planning incurs 188 fee units and 103 logical actions, compared with 192 and 105 for capability rules. Both have 108 aggregate logical time units. Verify--wait uses 191 fee units and 106 actions; greedy uses 195 and 94. Greedy has fewer logical actions than joint planning but higher fees and time. Thus, no arm dominates all reported resource measures. These are descriptive totals for the frozen enumeration; the fee difference is not a population-level efficiency claim.

Each active arm has six handoffs and three other unresolved assignments, in addition to its 28 completions. The unresolved total of nine includes those six handoffs. All assignments are retained, including tasks whose initial effects already violate an invariant. Three assignments per arm have an absolute historical violation; none has a recovery-introduced violation or a false completion claim. The common action check is shared, so the zero-new-violation result is not attributed specifically to joint selection.

\subsection{Goal satisfaction and evidence}
Each active arm has 30 assignments whose private goal is satisfied at the deadline but only 28 safe certified completions. The gap has two different causes. In a no-query case, the assigned effect has occurred, but public evidence still permits an absent or pending alternative. The controller cannot turn adjudicator knowledge into a certificate. In another case, the provider does not enforce commit-time authorization: the effect occurs after authority has expired and is an absolute violation. The second case is unsafe, rather than merely missing evidence.

The late-effect template preserves a commit beyond the task deadline. The deadline snapshot and full effect tail are both adjudicated. The valid-key and expired-key templates exercise different allowed recovery sets; they do not establish coverage of every key-expiry race. The item-level partial batch permits continuation of missing items while preserving completed items. Its result does not imply equivalent recovery for an opaque, coarse-grained batch.

The retirement-and-activation templates execute compensation and its ambiguous outcome as part of a fixed approved goal. Removing the staging object is insufficient without activating the replacement. Conversely, the template named ambiguous compensation begins with a one-valued goal already satisfied in both support states; its zero-action episodes are initial-certificate controls, not additional compensation-execution evidence. Coverage follows the recorded actions rather than the template name.

\subsection{Retained development revision}
The first source-bound collection is preserved with its original planner. That sequential depth-first implementation produced 27 safe completions, 218 fee units, 130 logical actions, and 140 logical time units for joint planning, while capability rules produced 28, 192, 105, and 108. In the missed retirement assignment, the planner repeatedly waited before initiating compensation and failed to activate the replacement by deadline.

The cause was unequal evaluation depth under a shared expansion budget. An early root action could obtain a deep continuation value while later actions were compared using immediate leaves after budget exhaustion. Replanning repeated that comparison. The revised implementation only selects from complete common-depth layers and keeps unfinished layers diagnostic. It retains the same 128-expansion limit, maximum horizon, support, safety check, and strong-rule policies. Generic delayed-write and delayed-compensation regressions vary identifiers, operation ordering, and uninformative actions rather than recognizing a task name.

The second collection reuses the development templates, so its improved planner behavior is a development revision result, not an independent confirmatory gain. The revised planner matches the strong controls' completion count and has the fee/action totals in Table~\ref{tab:policies}. Original negative results, causes, source bytes, and denominators remain reproducible.

\subsection{Stable tails and measured computation}
\label{sec:mechanism-cost}
The third collection retains the same 37 assignments and five methods after a certificate revision. Certification checks every future effect prefix, including equal-time serialization, rather than comparing only current and settled counts. Independent hand-calculated regressions reject a compensation followed by a restoring write at both distinct and equal times; a finite same-valued replacement remains certifiable, while an unknown-delay sentinel remains unresolved. The current fourth collection adds per-root-action full objective-vector diagnostics without changing action selection. It is fully recollected and independently replayed under its own source bindings. All prior collections and their source bytes remain intact. Completion and synthetic resource totals are unchanged; these are corrected/instrumented development replications, not additional independent samples.

\begin{table}[htbp]
\caption{Measured computation in the complete source-bound enumeration. Episode and selection totals and per-decision median/p95 are host wall times in milliseconds, not provider latency. Cutoff counts truncated search decisions; fallback counts joint decisions selected by the common capability rules. Shared-host timings are descriptive and not monetary totals.}
\label{tab:compute}
\small
\begin{tabularx}{\linewidth}{@{}Yrrrrrr@{}}\toprule
Policy & Episode & Selection & Median & p95 & Cutoff & Fallback \\ \midrule
Immediate stop & 35.82 & 24.36 & 0.37 & 2.41 & 0 & 0 \\
Verify--wait rules & 173.81 & 134.39 & 0.73 & 3.86 & 0 & 0 \\
Capability rules & 180.83 & 139.83 & 0.70 & 3.55 & 0 & 0 \\
Greedy selection & 168.76 & 132.66 & 0.73 & 5.01 & 0 & 0 \\
Joint planning & 9576.16 & 9531.66 & 24.78 & 288.53 & 90 & 20 \\
\bottomrule
\end{tabularx}
\end{table}

Joint selection uses 83 planner-selected actions and 20 rule-fallback actions. Ninety decisions truncate the search, and thirteen finish the maximum six-step horizon. Its measured selection time totals \JointSelectionSeconds{} seconds, versus \RulesSelectionSeconds{} seconds for capability rules in the current collection. Selection p95 is \JointSelectionPninetyfive{} milliseconds versus \RulesSelectionPninetyfive{} milliseconds. These are shared-host measurements with implementation and load effects, not service latency, CPU billing, or API prices.

The persistent cost diagnostic retains all 185 assignment--policy rows of the original third collection and all 185 rows of the current fourth collection. It reports each cost component and Pareto membership per assignment and public template. For the third collection, joint planning saves four fee units and two actions, at \OriginalExtraSeconds{} additional measured selection seconds. With symbolic prices $p_F$ per synthetic fee unit and $p_S$ per selection second, ignoring other deployment terms, the measured break-even condition is
\[
 4p_F>\OriginalExtraSeconds\,p_S,
 \qquad p_F/p_S>\OriginalBreakEven\ \text{seconds per fee unit}.
\]
No market price is assigned to either unit. Calls, logical latency, integration, and uncertainty in shared-host timings remain separate terms. Templates with zero fee savings have no positive fee/compute trade-off of this form; negative savings are retained. Thus, the four-unit fee reduction does not demonstrate a monetary net benefit or a Pareto improvement over matched capability rules when compute is included. Table~\ref{tab:task-cost} discloses where the savings and overhead occur.

\subsection{Independent development diagnostics}
The original six-pair small reference remains archived. The expanded post-review audit checks \DiagnosticPairs{} task--horizon pairs from \DiagnosticTasks{} separately authored parameterizations at horizons one, two, and three. All \DiagnosticRootValues{} admissible root-action values match the independently enumerated three-component Python vectors, with selected actions checked against tied optimal sets when the planner selects. The \DiagnosticBudgetPairs{} budget pairs exercise complete common layers, \DiagnosticPartialLayers{} incomplete deeper layers, and \DiagnosticFallbacks{} rule selections. Their completed vectors and selection basis agree with the exhaustive reference or the documented fallback in every pair.

The parameterizations cover finite action delay, initial finite pending effects, approved zero-count compensation, dependencies, unequal weights, failed-first attempt sequences, and externally supplied correlated support. A deliberately excluded successful outcome produces \texttt{BELIEF\_CONTRADICTION}, not an empty belief or another execution decision; an unknown sentinel stays pending and uncertifiable after waiting. These checks establish implementation agreement for this qualified small domain, not detection of unobservable model errors. They were designed after review, are development diagnostics rather than held-out tests, and do not increase the 37-assignment empirical denominator or establish global recovery optimality.

\subsection{Native qualifications}
The native durable executor passes eight local persistence/evidence checks. Its lost-acknowledgment handler produces one synthetic effect before raising an error; a later invocation remains in manual review and does not execute the handler again. These checks establish that particular native containment path, not automated task completion by the joint planner.

The pinned Atomix demo executes its winning branch and reports a clean aborted partial branch. The pinned UndoBench smoke command also completes, but its own report records one passing normal condition out of two pairs, zero passing fault conditions, and one pair with duplicate effects. The run is retained as an original scripted pipeline qualification. Its script has benchmark-internal plan access, and its verdicts are not included in Table~\ref{tab:policies} or used to rank the joint method against UndoBench.

\subsection{Local-model qualification results}
Table~\ref{tab:local-model-qualification} reports the separate authored
qualification pilot. Evidence-only prompting yields two certified safe
completions in 16 cells, while planning-assisted prompting yields seven in 16.
Private goal satisfaction is eight versus twelve cells, respectively.
Planning assistance uses 79 model proposals rather than 114 and receives 51
rejections rather than 111. It also produces 28 executed admissible actions,
compared with three under evidence-only prompting.

% Proposed table generated from:
% Generated from the archived local-model qualification summary.
\begin{table}[htbp]
\caption{Authored local-model qualification pilot. Safe is certified safe
completion; goal is private goal satisfaction at the deadline. Attempts count
model proposals, and rejected counts proposals blocked before execution. The
32 cells are development qualification evidence, not confirmatory population
evidence.}
\label{tab:local-model-qualification}
\small
\begin{tabularx}{\linewidth}{@{}Yrrrrrrr@{}}\toprule
Strategy & $N$ & Safe & Goal & Certified & Handoff & Attempts & Rejected \\
\midrule
Evidence prompt & 16 & 2 & 8 & 2 & 0 & 114 & 111 \\
Planning assisted & 16 & 7 & 12 & 7 & 2 & 79 & 51 \\
\bottomrule
\end{tabularx}
\end{table}

The descriptive differences are concentrated in stale-query and partial-batch
profiles. Planning assistance certifies both item-level partial-batch cells,
both coarse-batch proxy cells, and one of two stale-query cells; evidence-only
prompting certifies none of those cells. Neither condition certifies the
terminal-query, no-terminal-query, compensation-limited, or authorization
profiles across both models. The two handoffs under planning assistance occur
in the no-terminal-query and authorization profiles.

Across both strategies, \texttt{gpt-oss-20b} records five certified safe
completions in 16 cells and \texttt{glm-4.7-flash} records four. The experiment
does not isolate model identity as a causal variable: both bundles share LM
Studio, each profile has one authored hidden assignment, and there are no
independent repeated samples. We therefore report counts and failure patterns
without significance tests or a general claim that planner advice improves
model recovery.

\subsection{Assignment-held-out confirmatory results}
\label{sec:model-confirmation}
\begin{table}[t]
\caption{Assignment-held-out confirmation with a common admissibility shield. Each strategy has 58 model-by-assignment episodes; public task families were used in development.}
\label{tab:confirmatory-recovery}\centering\small
\begin{tabular}{@{}lrrrr@{}}\toprule
Strategy & Certified & Actual goal & Handoff & Attempts \\\midrule
Evidence prompt & 9/58 & 15/58 & 0/58 & 399 \\
Planning assistance & 30/58 & 33/58 & 4/58 & 272 \\
\bottomrule\end{tabular}
\end{table}

Planning assistance certifies \RecPlanner{} of \RecPairs{} paired episodes (\RecPlannerPct\%), compared with \RecEvidence{} (\RecEvidencePct\%) under evidence prompting. Nine pairs are completed by both strategies, 28 by neither, 21 only with planning assistance and zero only with evidence prompting. The paired difference is \RecDifference{} percentage points. Eight of 19 family sums are nonzero and all favor planning assistance; the registered exact test gives $p=\RecP$. The McNemar sensitivity result is $9.54\times10^{-7}$ and does not replace the family-level primary test.

The actual deadline goal is satisfied in 33 planning-assisted and 15 evidence-prompt episodes (Table~\ref{tab:confirmatory-recovery}). These counts differ from public certification, which also requires the recorded safety and finite-tail conditions. Four planning-assisted episodes hand off; these remain distinct from completion. Each model contributes 29 matched assignments and 58 episodes across both strategies. The per-condition breakdown below avoids using pooled model totals as an advice-effect comparison.

\begin{table}[t]
\caption{Recorded model resources and rejected proposals in the confirmatory study. Prompt and completion totals are observed tokens; reasoning is reported separately and is not added again to completion tokens. Planner computation is additional CPU work.}
\label{tab:confirmatory-resources}\centering\small
\begin{tabular}{@{}lrrrr@{}}\toprule
Strategy & Unsafe proposals & False claims & Prompt tok. & Completion tok. \\\midrule
Evidence prompt & 317 & 218 & 273,983 & 8,310 \\
Planning assistance & 123 & 76 & 196,822 & 5,686 \\
\bottomrule\end{tabular}
\end{table}

Planning assistance uses 272 proposals versus 399, and 202,508 observed prompt-plus-completion tokens versus 282,293. Rejected unsafe proposals fall from 317 to 123 and false completion claims from 218 to 76. These claims are recorded proposals, not certified outcomes. Reasoning telemetry is reported separately (2,892 versus 4,090 tokens) without adding it twice to completion totals. Search advice consumes additional CPU work; fewer model tokens do not establish lower total monetary or deployment cost.

The completion advantage concerns the two evaluated local bundles, common shield and supplied known provider model. The deterministic strong controls tie at 28/37 in their separate study. The model comparison establishes a benefit of exposing planner advice relative to the matched evidence-prompt interface. Its batch conditions use item-level proxies, and both bundles run through the same local runtime with authored provider transitions. These conditions define the execution frame of the confirmation.

\subsection{Recorded heterogeneity and resource interpretation}
The following descriptions were added after the registered outcome was available, using archived episode records and the separate oracle labels only. They do not add assignments, rerun models or replace the clustered primary test.
% Derived from archived records and independent oracle labels only.
\begin{table}[t]
\caption{Post-hoc confirmation breakdown. Each row reports paired units; counts are certified completion. Families listed separately have a nonzero net difference; the remaining eleven are aggregated without omission.}\label{tab:recovery-strata}
\centering\small
\begin{tabularx}{\linewidth}{Yrrr}
\toprule
Stratum & Pairs & Evidence & Advice\\
\midrule
gpt-\allowbreak{}oss-\allowbreak{}20b & 29 & 5 & 15\\
glm-\allowbreak{}4.7-\allowbreak{}flash & 29 & 4 & 15\\
expired-\allowbreak{}key-\allowbreak{}recovery & 4 & 0 & 4\\
billing-\allowbreak{}chain & 4 & 0 & 2\\
approval-\allowbreak{}independent & 4 & 0 & 3\\
retirement-\allowbreak{}already-\allowbreak{}inflight & 4 & 0 & 1\\
authorization-\allowbreak{}at-\allowbreak{}acceptance & 2 & 0 & 1\\
stock-\allowbreak{}diamond & 6 & 0 & 6\\
file-\allowbreak{}fork & 2 & 0 & 2\\
batch-\allowbreak{}items & 2 & 0 & 2\\
Other eleven families & 30 & 9 & 9\\
\bottomrule
\end{tabularx}
\end{table}
\newcommand{\RecLooMin}{28.8}
\newcommand{\RecLooMax}{38.9}
\newcommand{\RecTokenReduction}{28.3}

Table~\ref{tab:recovery-strata} reports 15/29 versus 5/29 certificates for gpt-oss-20b and 15/29 versus 4/29 for GLM-4.7-Flash. The eight nonzero family contrasts account for all 21 net additional completions; stock-diamond contributes six and expired-key recovery four. Removing any whole family retains a positive descriptive difference, ranging from \RecLooMin{} to \RecLooMax{} percentage points. The other eleven families contain 30 pairs and tie at 9/30. Thus advice is useful in specific tested recovery structures, not uniformly across every capability profile. No family was removed from the registered analysis.

Four episodes in each condition require no model call and are already certifiable at initialization; they remain in the 58-pair denominator. Advice executes 101 admissible actions versus 34 under evidence prompting, while reducing proposals from 399 to 272. Rejected unsafe proposals are attempts, not executed violations, and repeated false completion claims are counted per proposal rather than per episode. These quantities describe how the interface reaches valid progress, not a change in the shared authority or safety contract.

Observed prompt-plus-completion usage falls by \RecTokenReduction{}\%, but the advice adds planning computation. The deterministic strong-rule study supplies the explicit fee/compute trade-off in Section~\ref{sec:mechanism-cost}; its timing is not a substitute for a per-episode search-time measure in the model study. Total deployment economics must retain provider actions, waiting, model usage, search computation and integration as separate charged categories. Where a category is not measured for this model frame, no total monetary saving is inferred.

\section{Interpretation and Applicability}
\subsection{Operational meaning}
Recovery should expose what is still owed, what is known to have occurred, and which evidence would permit another action. A blanket timeout retry conceals these distinctions. A blanket stop can also leave independent approved obligations unfinished. The reference interface makes continued execution, safe waiting, compensation, and unresolved handoff comparable under one declared contract.

The goal representation matters. Retirement of an old staging object followed by activation of a replacement is a two-obligation task. A compensation response does not satisfy the activation obligation. A final receipt for an earlier write likewise does not establish the current goal after an approved compensation. Scope and intent bindings connect evidence to the business condition being certified.

Provider capability differences change the appropriate action, not the intended goal. A valid atomic key can support a repeated attempt; an expired key cannot. A current negative query may help after a trusted settlement bound but can be insufficient while an older request is pending. Under commit-time authorization, a provider must enforce authority at that boundary. A local journal cannot retroactively fence an opaque remote operation.

\subsection{Evidence scope and validity}
The study measures recovery choices in an explicit, finite, known provider model, including candidate future attempt sequences and repeated final outcomes. This makes invisible outcome differences auditable and supports complete enumeration, but gives substantially more predictive information than a normal capability declaration. Default cross-operation product weights impose prior independence; the separate supplied-correlated-support diagnostic checks implementation handling, not estimation of real correlations. Real services may change contracts, introduce unmodeled failures, share mutable targets, or use batch semantics beyond the item-level profile. Observable model contradiction fails closed, while an unobservable omitted alternative can still invalidate retention and safety assumptions. Those conditions require independently qualified model coverage, not a repaired posterior chosen for convenience.

The implementation bounds operation counts and support size. Its explicit finite-tail proof covers scheduled effect prefixes, while the unknown-sentinel mechanism conservatively contains unresolved work. A null commit time records missing settlement information and remains distinct from every concrete future arrival time; persistence of that marker leaves completion uncertified. Additional invariants for harmless residual work or a stronger concretization abstraction define a separate qualification target beyond this profile.

The common safety check is an evaluation control. Zero newly introduced violations in the recorded finite set establishes coverage of those checks for those executions, not zero operational risk. The three assignments with prior or externally scheduled hazards remain absolute task failures. Separating attribution helps explain a recovery decision without erasing the unsafe history.

The four-unit synthetic fee saving and additional computation form a measurable resource trade-off. Complete per-task cost vectors retain zero and negative savings, and the compute-inclusive Pareto comparison preserves all reported axes. The symbolic break-even makes the required price ratio explicit. Translating these resource quantities into deployment economics requires isolated-host timing, provider prices, end-to-end latency and integration-cost measurements. The exhaustive reference validates the implemented finite-horizon objective within the declared provider model; the local-model study separately evaluates how its advice changes recovery behavior.

\subsection{Deployment and model evaluation}
The native persistence qualification establishes a real call path through existing public modules for request binding, durable replay, receipt reconstruction and ambiguous-delivery containment. The decision policies are evaluated in a separate finite runner. These tracks characterize persistence behavior and recovery selection, respectively. Evaluating their integrated deployment through independent agent runtimes and third-party providers would extend that evidence with matched executor and provider outcomes.

The local-model comparison holds model configuration, action schema, observations and per-episode limits constant, while adding planner advice in one condition. All proposals, rejections and observable tokens are retained. Its 29 held-out units are latent assignments within task families used during development, and both models share LM Studio. Every executed action passes the common shield, so the completion comparison evaluates advice within that shared safety layer. The confirmation establishes an advice-interface benefit in this frame; unseen task families, independent runtimes and real provider sandboxes define separate transfer tests. ReAct and Reflexion provide established agent-control context~\cite{yao2023react,shinn2023reflexion}, while the reported scripted controls are the explicitly defined finite-runner policies.

An external service experiment should use synthetic sandbox entities, documented provider capabilities, and response-loss injection without sending real communications or creating real charges. Original capability and author-imposed restrictions must be separated. Evaluation of task completion, delayed effects, and unknown outcomes should be independently bound to the resulting public evidence. Native demos remain qualification evidence until that matched interface and outcome coverage are supplied.

\subsection{Reproduction and material availability}
The retained materials include source-bound protocols, all controlled trajectories, a separately implemented adjudicator, generated tables, per-task costs, complete-root objective checks, budget-layer diagnostics, native qualification reports, local-model qualification records and the complete assignment-held-out confirmation. The recovery source/data candidate contains the synthetic controlled study and is retained locally for release review. No public identifier has yet been assigned to that candidate. The existing executor repository is separate from the new recovery study artifact.

\section{Conclusion}
Recovery from ambiguous tool effects is a task-level decision over evidence, remaining obligations, and provider capabilities. The known-model reference separates realized private alternatives from publicly supplied future attempt sequences, checks actions against the full compatible support, and requires prefix-stable finite-tail evidence rather than a plausible success message. Unknown-delay sentinels remain a separate conservative containment abstraction. The model includes intent-bound keys, authorization timing, dependencies, approved compensation, and resource limits.

The recorded finite study compares five policies on 37 assignments from 20 synthetic templates, with independent effect-history adjudication. The planner and the three active strong controls each complete 28 assignments safely and certifiably. Its resource trade-off consists of four saved synthetic fee units, two saved actions and increased selection computation, with monetary deployment costs requiring separate measurement. Persistent task costs and a full-vector/budget-layer audit make that trade-off and the implemented finite-horizon objective auditable. The retained first planner version demonstrates the importance of comparing completed search layers under a common evaluation objective.

The preregistered local-model comparison supplies end-to-end recovery evidence within the same known model. Planning assistance certifies \RecPlanner/\RecPairs{} paired episodes versus \RecEvidence/\RecPairs{} with evidence prompting, a \RecDifference{}-percentage-point difference ($p=\RecP$ under the registered family test), while using fewer proposals and observed model tokens. All \RecEpisodes{} trajectories pass independent replay. This result retains the common shield, authored scenario frame and one-runtime scope.

The operational conclusion is capability-specific: continue approved work when the public evidence and contract support it, and retain explicit unknown or handoff outcomes when they do not. The completed model study supports the evaluated advice interface. Generalization to new families, independent runtimes and third-party provider effects requires corresponding evidence.

\bibliographystyle{ACM-Reference-Format}
\bibliography{references}
\clearpage
\appendix
\section{Artifact and Protocol Detail}
\begin{table}[htbp]
\caption{Assignment coverage and certified safe completions by public template. Template names are identifiers, not independent population samples or proof that every named action was executed.}
\label{tab:coverage}
\small
\begin{tabularx}{\linewidth}{@{}Yrrrr@{}}\toprule
Public template & Assignments & Rules & Greedy & Joint \\ \midrule
billing--chain & 2 & 2 & 2 & 2 \\
stock--diamond & 4 & 4 & 4 & 4 \\
file--fork & 1 & 1 & 1 & 1 \\
approval--independent & 2 & 2 & 2 & 2 \\
batch--items & 2 & 2 & 2 & 2 \\
valid--key--recovery & 2 & 2 & 2 & 2 \\
expired--key--recovery & 2 & 2 & 2 & 2 \\
delayed--query & 2 & 2 & 2 & 2 \\
inflight--query & 2 & 2 & 2 & 2 \\
no--query--unknown & 3 & 0 & 0 & 0 \\
late--commit--after--deadline & 2 & 0 & 0 & 0 \\
ambiguous--compensation & 2 & 2 & 2 & 2 \\
failed--continuation & 1 & 1 & 1 & 1 \\
configuration--retirement & 2 & 2 & 2 & 2 \\
retirement--already--inflight & 2 & 2 & 2 & 2 \\
existing--duplicate--risk & 2 & 0 & 0 & 0 \\
authorization--at--acceptance & 1 & 1 & 1 & 1 \\
authorization--at--commit--enforced & 1 & 1 & 1 & 1 \\
authorization--at--commit--unenforced & 1 & 0 & 0 & 0 \\
budget--limited--independent & 1 & 0 & 0 & 0 \\
\bottomrule
\end{tabularx}
\end{table}

\begin{table}[htbp]
\caption{Persistent per-template comparison on the original 185-row mechanism-003 collection. Positive savings favor joint planning versus capability rules; extra selection seconds are host wall time, not CPU billing. Break-even is seconds per saved synthetic fee unit, not a measured monetary benefit. Full per-assignment costs and Pareto sets are in the separate diagnostic artifact.}
\label{tab:task-cost}
\small
\begin{tabularx}{\linewidth}{@{}Yrrrr@{}}\toprule
Template & Fee saved & Actions saved & Extra seconds & Break-even \\ \midrule
billing--chain & 0 & 0 & 1.494 & -- \\
stock--diamond & 0 & 0 & 4.302 & -- \\
file--fork & 0 & 0 & 0.475 & -- \\
approval--independent & 0 & 0 & 0.539 & -- \\
batch--items & 0 & 0 & 0.714 & -- \\
valid--key--recovery & 1 & -1 & 0.065 & 0.065 \\
expired--key--recovery & 0 & 0 & 0.058 & -- \\
delayed--query & 0 & 0 & 0.176 & -- \\
inflight--query & 0 & 0 & 0.212 & -- \\
no--query--unknown & 0 & 0 & 0.070 & -- \\
late--commit--after--deadline & 0 & 0 & 0.024 & -- \\
ambiguous--compensation & 0 & 0 & 0.000 & -- \\
failed--continuation & 0 & 0 & 0.132 & -- \\
configuration--retirement & 1 & 1 & 0.557 & 0.557 \\
retirement--already--inflight & 1 & 1 & 0.554 & 0.554 \\
existing--duplicate--risk & 0 & 0 & 0.131 & -- \\
authorization--at--acceptance & 0 & 0 & 0.002 & -- \\
authorization--at--commit--enforced & 0 & 0 & 0.013 & -- \\
authorization--at--commit--unenforced & 1 & 1 & 0.060 & 0.060 \\
budget--limited--independent & 0 & 0 & 0.006 & -- \\
\bottomrule
\end{tabularx}
\end{table}

\subsection{Public task schema}
An operation binds an ASCII identifier, target, request digest, dependencies, effect class, approved goal count, fees, durations, and the provider capability fields in Table~\ref{tab:contracts}. Goal counts are zero or one. The finite support records positive weights, initial copies, historical commit and visibility times, key records, existing pending effects, and declared write/compensation outcome sequences. A compensated initial history explicitly records both its historical write and its removal. A valid historical key can therefore remain while the current count is zero.

Outcome sequences specify successful or failed attempts, response type, finite nonnegative integer delay, and effect multiplicity. Their last entry repeats for every later attempt; they are complete known-model sequences, not capability-only hints. The initializer uses a cross-operation product prior; a supplied public joint support is required for correlations. Subsequent attempts use finite declared outcome delays. An initially pending \texttt{null} commit time is an unknown sentinel that does not advance or certify completion, not concrete simulation of arbitrary unbounded delay. Initial finite effects and pending events are replayed by event time, dependency order for equal times, and stable event identifier. A topological initial replay alone would conceal an earlier child effect that violated its predecessor condition, so temporal order is preserved.

\subsection{Evidence closure}
The controlled protocol binds the provider environment, scenario construction, recovery policies, Python adjudicator, collector, and checker. The row chain includes sequence, previous-row hash, and the full episode. The final manifest binds the exact protocol and raw stream. Verification checks source identity, frozen configuration, denominator, and order before replaying policies and independently adjudicating their outputs.

These hashes support consistency and provenance under a fixed source set. They are not an append-only remote storage service or a defense against an authorized person rewriting both source and evidence. Native qualification has its own source bindings and raw logs. Upstream gitlinks are recorded by their pinned commit and marked uninitialized when not needed by the executed demo; no unexecuted submodule is represented as tested code.

\subsection{Reproduction commands}
With the recorded Node.js and Python versions available, the scientific collection can be recreated in a fresh numbered evidence directory:
\begin{verbatim}
node --test artifact/tests/*.test.mjs
PYTHONDONTWRITEBYTECODE=1 python3 -m unittest discover \
  -s artifact/tests -p 'test_*.py'
node artifact/run-study.mjs results/mechanism-005
node artifact/check-study.mjs results/mechanism-005
node artifact/render-study.mjs results/mechanism-005
node artifact/audit-development.mjs \
  results/development-diagnostics-002 results/mechanism-005
node artifact/audit-development.mjs --render \
  results/development-diagnostics-002
latexmk -pdf -interaction=nonstopmode -halt-on-error main.tex
\end{verbatim}
The recorded source snapshot, not a future modified checkout, defines the corresponding collection. Existing numbered directories are never overwritten. Python independently replays effects and observations before a table renderer accepts results. Native executor tests require the existing package build; upstream qualifications require separately obtained pinned repositories. The finite controlled labels are reconstructed from the separate Python replay, with native qualification results retained in their own track. The numbered diagnostic root freezes its parameterizations and source bytes before evaluation; its verifier re-enumerates all objective/budget checks and recomputes task costs from bound original records.

\subsection{Evaluation distinctions}
Actual goal satisfaction, a public terminal certificate, and absolute safe completion answer different questions. The actual goal uses the private replayed state; the certificate uses all states consistent with public observations; safe completion additionally requires the full history and deadline constraints. A handoff is counted as handoff, not task completion. Compensation is counted as a public action and never silently changes the approved goal.

The records retain logical durations, collector wall times, and one measured selection time per decision separately. These timings include implementation and host-load effects and do not substitute for provider latency or CPU billing. Replay compares semantic outcomes while excluding nondeterministic timing fields. Total fees, calls, waits, compensation, full-depth decisions, planner/fallback selections, and truncations are derived from complete rows. The expanded small-domain audit independently enumerates all root-action vectors, compares each complete layer, allows tied optima, and separately checks partial-layer exclusion and rule fallback. Complete task-cost rows retain all five policies; Pareto axes maximize certified completion and minimize introduced violations, fee, actions, logical time, and selection wall time. This is finite-horizon implementation validation and symbolic cost sensitivity, not full-task or production optimality.

\subsection{Model protocols and execution deviations}
The archived qualification bundle and the archived confirmation bundle bind the two numbered outcome runs and their recorded summary. The preregistration specifies the exhaustive pilot-assignment complement and primary family test before collection. During integration, both outcome runs are verified without writing into them, and paired counts and the exact test are recomputed from the exported episode/pair records.

The first confirmatory preflight failed when runtime JIT unloading invalidated a model key. Its record is retained. The scientific freeze was unchanged for the successful launch. The handoff source manifest names \nolinkurl{start-confirmatory-001.sh}, while the actual launcher is \nolinkurl{start-confirmatory-002.sh}; its SHA-256 is \nolinkurl{8ef0478be31c262421c5b89577f6659c1471cdd34295149cf9cc71688bbf1c45}. The release records this launch discrepancy separately from the frozen scientific source and does not claim an identical launcher manifest.

\end{document}